\documentclass[letterpaper]{article} 
\usepackage{aaai2027}  
\usepackage[hyphens]{url}  
\usepackage{graphicx} 
\usepackage{natbib}  
\usepackage{caption} 
\usepackage{times}
\usepackage{helvet}
\usepackage{courier}
\usepackage[hyphens]{url}
\usepackage{graphicx}
\usepackage{booktabs}
\usepackage{amsmath}
\usepackage{amssymb}
\usepackage{natbib}
\usepackage{amsthm}
\usepackage{float}
\usepackage{subcaption}
\usepackage{algorithm}
\usepackage{algorithmic}
\usepackage[hyphens]{url} 
\usepackage{graphicx} 
\usepackage{natbib} 
\usepackage{caption} 
\usepackage{booktabs}
\usepackage{amsmath,amssymb}
\usepackage{array}
\newtheorem{proposition}{Proposition}
\usepackage{newfloat}
\usepackage{listings}
\DeclareCaptionStyle{ruled}{labelfont=normalfont,labelsep=colon,strut=off} 
\floatstyle{ruled}
\newfloat{listing}{tb}{lst}{}
\floatname{listing}{Listing}

\usepackage{booktabs}

 \nocopyright 

\title{Beyond Single-Use Tokens: Durable Authorization State for Replay-Resistant LLM Agent Actions}
\author{
	Jinghan Xu\textsuperscript{\rm 1},
	Longze Fan\textsuperscript{\rm 2},
	Zeyuan Wang\textsuperscript{\rm 3},
	Xinjin Li\textsuperscript{\rm 4},
	Hankai Liu\corresponding\textsuperscript{\rm 1}
}
\affiliations{
	\textsuperscript{\rm 1}Nankai University, China\\
	\textsuperscript{\rm 2}China University of Petroleum, China\\
	\textsuperscript{\rm 3}Sun Yat-sen University, China\\
	\textsuperscript{\rm 4}Columnbia University, USA\\

}

\begin{document}
	\maketitle
	
	\begin{abstract}
		Tool-using large language model agents frequently replan, retry failed operations, delegate tasks, and resume after crashes. These behaviors can cause one user authorization to be requested and executed multiple times under freshly issued token identifiers, even when each individual token is single-use. We call this failure \textbf{semantic replay}: exceeding the execution budget of a token-independent authorization instance rather than merely reusing an old token identifier. We show that identifier-local token consumption cannot prevent fresh reissuance unless the issuer retains monotonic durable state over the authorized action, confirmation event, and remaining execution budget. We introduce \textbf{CapLease}, an authorization-consumption layer that follows proposal- and authority-level defenses, binds an authenticated user confirmation to a canonical action, and enforces transactional Issue–Prepare–Commit transitions. Across LLM-agent replanning, retry, delegation, concurrency, confirmation-replay, and crash-recovery scenarios, identifier-local tokens permit fresh semantic reissuance, whereas CapLease and an equally stateful Server Ledger prevent duplicate admission and, with an idempotent sink, duplicate external effects. Our results identify durable authorization state—not token representation alone—as the systems requirement for replay-resistant agent execution.
		
	\end{abstract}
	
	\section{Introduction}
	\label{sec:introduction}
	
	Large language model (LLM) agents can plan, maintain state, and invoke
	external tools~\citep{yao2023react,schick2023toolformer}. Tool access
	turns model errors into external consequences, including payments,
	messages, credential changes, and persistent data loss. The risk grows
	when agents process mixed-trust webpages, emails, documents, and tool
	outputs
	\citep{ruan2023toolemu,debenedetti2024agentdojo,
		zhan-etal-2024-injecagent,zhang2025iclr-agent}. Secure execution must
	therefore control both whether an action is authorized and how that
	authorization is later used.
	
	Existing defenses cover different stages of execution. Proposal-level
	methods reduce harmful or task-inconsistent calls, while runtime
	mechanisms enforce privilege, provenance, and authority. Progent applies
	programmable privilege policies~\citep{shi2025progent}, PACT binds
	authority and provenance at argument granularity~\citep{fan2026pact},
	and AIRGuard verifies authority before execution
	\citep{qin2026airguard}. SUDP binds a fresh single-use grant to a
	canonical operation~\citep{yu2026sudp}. These methods ask whether an
	action should be authorized; we ask how many times that authorization
	may be materialized.
	
	This distinction matters because agent execution is iterative. An agent
	may replan after an ambiguous response, retry after a timeout, delegate
	work, use concurrent workers, or resume after a crash. Each step may
	produce another authorization request. Rejecting a consumed token does
	not prevent issuing a fresh token for the same authorization.
	
	Consider an agent given one authenticated confirmation to transfer
	\$500. The issuer creates single-use token $z_1$, and the transfer
	succeeds. If the acknowledgement is lost, the agent may submit an
	equivalent request. An issuer tracking only $z_1$ may then create a fresh
	$z_2$, causing a second transfer. The failure reuses the authorization,
	not the token identifier.
	
	We call this failure \emph{semantic replay}: an authorization is issued,
	admitted, or externally materialized beyond its budget despite each
	token being single-use. Prevention requires durable state over the
	authorization instance, together with coordinated issuance, admission,
	and recovery. Figure~\ref{fig:overview} contrasts identifier-local
	single use with token-independent authorization consumption: CapLease
	tracks the authorization instance rather than each issued identifier.
	
	\begin{figure*}[t]
		\centering
		\includegraphics[width=0.95\textwidth]
		{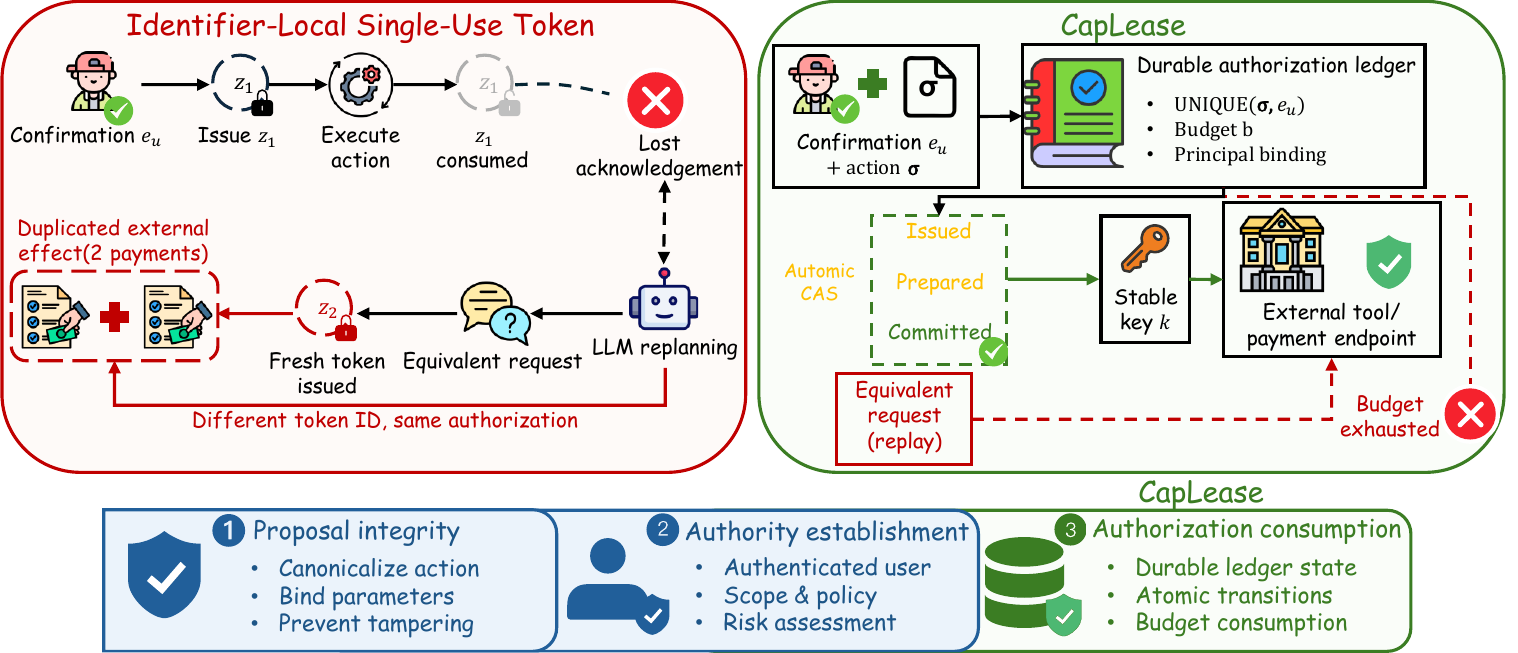}
		\caption{
			Semantic replay and CapLease. Tracking consumed identifiers blocks
			reuse of $z_1$ but may still permit a fresh $z_2$ for the same
			authorization. CapLease binds action $\sigma$ and confirmation $e_u$
			to durable budget state, using atomic admission and a stable
			idempotency key to prevent reissuance and duplicate effects.
		}
		\label{fig:overview}
	\end{figure*}
	
	We study three research questions:
	
	\begin{itemize}
		\item \textbf{RQ1: Agent-generated semantic replay.}
		How often do uncertain execution outcomes cause agents to produce
		semantically equivalent repeated calls, and which authorization
		controls prevent their repeated materialization?
		
		\item \textbf{RQ2: Durable budget and recovery.}
		Does token-independent state preserve issuance, admission, and
		effect budgets under multiple-use authorizations, concurrency,
		delegation, revocation, and process failures?
		
		\item \textbf{RQ3: Composition, identity, and utility.}
		How do authority checking, durable consumption, semantic
		canonicalization, and structured denial jointly affect security,
		availability, and agent recovery?
	\end{itemize}
	
	We introduce \emph{CapLease}, an authorization-consumption layer for
	tool-using agents. After an upstream authority check, CapLease binds the
	decision to a canonical action, authenticated confirmation, and execution
	budget. Each authorization follows an atomic
	\textsc{Issued}$\rightarrow$\textsc{Prepared}
	$\rightarrow$\textsc{Committed} protocol. Durable uniqueness blocks
	reissuance, atomic preparation bounds concurrent admission, and a stable
	idempotency key supports recovery without duplicate effects.
	
	We implement an equally stateful \emph{Server Ledger}, which uses
	the same records and transitions but sends the executor an
	authenticated selector. Under matched centralized assumptions, both
	provide the same replay protection, showing that the guarantee follows
	from durable state and transactional execution rather than manifest
	transmission alone.
	
	Our contributions are:
	
	\begin{itemize}
		\item \textbf{Empirical characterization of semantic replay.}
		We show across 10,152 agent trajectories that uncertain execution
		outcomes frequently induce semantically equivalent repeated calls,
		despite the use of fresh token or grant identifiers.
		
		\item \textbf{Durable authorization consumption.}
		We formalize replay resistance over issuance, admission, and
		external effects, and introduce CapLease to bind one upstream
		authorization event to token-independent budget state across
		retries, concurrency, delegation, and recovery.
		
		\item \textbf{Comprehensive evaluation and boundary analysis.}
		We evaluate public baselines, multi-budget execution, real
		fault injection, official-schema contract integrity, held-out semantic
		canonicalization, structured denial, and runtime overhead. A matched Server Ledger achieves the same replay safety, showing that the guarantee derives from durable state rather than manifest transmission alone.
	\end{itemize}
	
	\section{Related Work}
	\label{sec:related-work}
	
	\paragraph{Agent attacks and proposal-level defenses.}
	Tool-integrated agents may translate untrusted observations into unsafe
	external actions. ToolEmu studies consequential risks in simulated tool
	environments~\citep{ruan2023toolemu}, while AgentDojo, InjecAgent, and
	Agent Security Bench evaluate prompt injection and broader agent
	security threats
	\citep{debenedetti2024agentdojo,zhan-etal-2024-injecagent,
		zhang2025iclr-agent}. Existing defenses primarily prevent unsafe
	proposals. PromptArmor and LlamaFirewall detect suspicious inputs or
	behaviors~\citep{shi2025promptarmor,chennabasappa2025llamafirewall};
	MELON compares actions under masked re-execution
	\citep{zhu2025melon}; and CaMeL separates trusted control flow from
	untrusted data~\citep{debenedetti2025camel}. IPIGuard constrains tool use
	through dependency graphs~\citep{an-etal-2025-ipiguard}, whereas Task
	Shield and VIGIL verify task or intent consistency before execution
	\citep{jia-etal-2025-task,lin-etal-2026-vigil}. CapLease acts after
	proposal approval and controls how the resulting authorization is used.
	
	\paragraph{Runtime authority and stateful authorization.}
	Progent enforces programmable privilege policies
	\citep{shi2025progent}; PACT binds provenance and authority at argument
	granularity~\citep{fan2026pact}; and AIRGuard checks authority at action
	time~\citep{qin2026airguard}. Context-to-Execution Integrity likewise
	constrains external actions using trusted context
	\citep{santosgrueiro2026cxi}. These methods determine whether an action
	should be authorized, whereas CapLease limits how often an approved
	authorization may be consumed. Related classical systems preserve
	authority across system boundaries~\citep{271165}, attach authorization
	evidence to requests~\citep{647253.757044}, support scoped delegation
	through contextual caveats~\citep{birgisson2014macaroons}, or enforce
	history-dependent least-privilege policies~\citep{299818}. CapLease
	adapts these scoped and stateful principles to LLM-agent tool execution.
	
	\paragraph{Single-use grants and transactional execution.}
	SUDP binds a fresh single-use grant to a canonical secret-backed
	operation~\citep{yu2026sudp}. CapLease shares its exact-action and
	bounded-use goals but addresses durable state across issuance, admission,
	and recovery, including reissuance, concurrency, confirmation reuse, and
	crash duplication. RPC, secure RPC, transactions, linearizability, and
	leases provide the underlying mechanisms for uncertain outcomes,
	principal authentication, atomic updates, and bounded validity
	\citep{10.1145/2080.357392,10.1145/214451.214452,
		DBLP:conf/vldb/Gray81,10.1145/78969.78972,10.1145/74850.74870}.
	CapLease applies these primitives to token-independent authorization
	identities and execution budgets at the LLM-agent tool boundary.
	
	\section{Problem Formulation and Semantic Replay}
	\label{sec:problem}
	
	\paragraph{Agent execution model.}
	A tool-using agent receives a user task, processes observations and tool
	outputs, and proposes a call
	$c=(\mathsf{tool},\mathsf{args})$. Before execution, an upstream
	authority mechanism evaluates $c$ against trusted context $\gamma$,
	including the user task, provenance, principal identities, resource
	scope, and organization policy. If the call is approved, the runtime
	issues an authorization for use at the protected tool boundary. We focus
	on policy-designated high-risk actions with external side effects, such
	as payments, message transmission, credential changes, and persistent
	data modification.
	
	\paragraph{Authorization identity.}
	A token identifier represents an authorization but does not define its
	security-relevant identity. We define the token-independent action
	identity as
	\begin{equation}
		\sigma(c,\gamma)
		=
		(P_u,h_c,h_a,R),
		\label{eq:semantic-action}
	\end{equation}
	where $P_u$ is the user principal, $h_c$ and $h_a$ bind the canonical
	operation and complete arguments, and $R$ identifies the target
	resource. Tool-specific canonicalization must map effect-equivalent calls to the same identity while separating security-relevant differences. Ambiguous
	calls are rejected rather than assigned an uncertain authorization
	identity.
	
	We separately define the execution context as
	\begin{equation}
		\chi=(P_e,S,T,v_p,v_s),
		\label{eq:execution-context}
	\end{equation}
	where $P_e$ is the executor, $S$ and $T$ identify the session and task,
	and $v_p,v_s$ are the policy and schema epochs. These fields restrict
	where and by whom an authorization may be used, but do not create a new
	user-authorized effect.
	
	Let $e_u$ be an authenticated confirmation event bound to the displayed
	action, principal, and budget $b$. The authorization instance is
	\begin{equation}
		\alpha=(\sigma,e_u,b).
		\label{eq:authorization-instance}
	\end{equation}
	A token $z_i$ is only one issuance-specific representation of $\alpha$.
	Consuming $z_1$ therefore does not prevent a fresh $z_2$ for the same
	$\alpha$ unless the issuer maintains token-independent state.
	
	\paragraph{Threat and trust model.}
	The adversary may control external content observed by the agent and
	induce repeated requests through replanning, delegation, concurrent
	workers, ambiguous tool responses, or lost acknowledgements. It may
	also copy authorization objects, replay confirmation events, race
	issuance or admission requests, and attempt cross-principal use. Benign
	retries and duplicate delivery are also in scope because semantic replay
	does not require malicious model behavior. Similar ambiguity arises in
	remote execution when a caller cannot determine whether a timed-out
	operation took effect~\citep{10.1145/2080.357392}.
	
	The adversary cannot forge authenticated principals, confirmation events,
	provenance records, policy epochs, tool contracts, or committed ledger
	transitions. Trusted components provide durable atomic storage,
	authenticated action schemas, semantic canonicalization, and stable
	confirmation identifiers. Retries of the same confirmation challenge
	must return the same $e_u$. Incorrect trusted metadata, confirmation
	duplication, and ledger rollback are trusted-computing-base failures and
	are evaluated separately.
	
	\paragraph{Semantic replay.}
	We define \emph{semantic replay} as issuing, admitting, or materializing
	an authorization beyond its budget. Unlike token replay, it may involve
	fresh identifiers and execution requests. Table~\ref{tab:threats}
	summarizes the considered failures.
	
	\begin{table}[t]
		\centering
		\small
		\setlength{\tabcolsep}{4pt}
		\begin{tabular}{
				p{0.29\columnwidth}
				p{0.63\columnwidth}}
			\toprule
			Failure & Description \\
			\midrule
			
			Old-token replay &
			A consumed token is presented again. \\
			
			Fresh reissuance &
			A fresh token represents the same authorization. \\
			
			Concurrent issuance &
			Multiple records are created for the same $(\sigma,e_u)$. \\
			
			Concurrent admission &
			Multiple executors pass a non-atomic check. \\
			
			Cross-principal use &
			Another executor uses a principal-bound authorization. \\
			
			Confirmation replay &
			One confirmation authorizes multiple instances or budgets. \\
			
			Crash duplication &
			An uncertain outcome triggers a repeated effect. \\
			
			\bottomrule
		\end{tabular}
		\caption{
			Failures in our threat model. Token-level tracking addresses only
			old-token replay.
		}
		\label{tab:threats}
	\end{table}
	
	\paragraph{Security goals.}
	For an execution trace $\tau$, let
	$N_{\mathrm{issue}}(\alpha,\tau)$ denote the number of authorization
	slots made available for $\alpha$,
	$N_{\mathrm{admit}}(\alpha,\tau)$ the number of slots admitted for
	execution, and
	$N_{\mathrm{effect}}(\alpha,\tau)$ the number of externally materialized
	effects. Replay-resistant consumption requires
	\begin{align}
		N_{\mathrm{issue}}(\alpha,\tau) &\leq b,
		\label{eq:bounded-issuance}\\
		N_{\mathrm{admit}}(\alpha,\tau) &\leq b,
		\label{eq:bounded-admission}\\
		N_{\mathrm{effect}}(\alpha,\tau) &\leq b.
		\label{eq:bounded-effect}
	\end{align}
	Bounded issuance prevents excess budget creation, while bounded
	admission prevents concurrent or repeated execution from exceeding the
	budget. These properties require atomic transitions over shared
	authorization state~\citep{10.1145/78969.78972,
		DBLP:conf/vldb/Gray81}.
	
	Bounded effects additionally require the external sink to honor a stable
	idempotency key. Without this contract, the runtime can bound admission
	but cannot guarantee exactly-once physical effects after an uncertain
	failure. Initial authorization soundness remains the responsibility of
	the upstream authority mechanism. CapLease instead mediates each
	subsequent use of that authority, following the classical principle that
	authorization must be checked whenever a protected object is accessed
	\citep{saltzer1975protection}.
	
	\section{Transactional CapLease}
	\label{sec:caplease}
	
	CapLease is the authorization-consumption component of a two-stage
	runtime. An upstream \emph{Authority Gate} checks task alignment,
	provenance, scope, and organization policy. CapLease then binds the
	approved action to durable budget state and bounds its issuance,
	admission, and external effects. Figure~\ref{fig:overview} summarizes
	this composition.

	\subsection{Authorization Issuance}
	\label{sec:issuance}
	For an approved call $c$, the issuer receives its action identity
	$\sigma(c,\gamma)$, execution context $\chi$, confirmation event $e_u$,
	and budget $b$, and creates the durable record
	\begin{equation}
		r_\alpha =
		(\sigma,e_u,b,\chi,t_i,t_e,\nu),
		\label{eq:authorization-record}
	\end{equation}
	where $[t_i,t_e]$ is the validity interval and $\nu$ is a stable record
	identifier. The execution context $\chi$ contains the executor, session,
	task, and policy and schema epochs. The ledger enforces uniqueness over $(\sigma,e_u)$, so repeated requests return the existing record rather than new budget. Changing the executor, session, or contract version does not create another authorization for the same $(\sigma,e_u)$. Such changes must update or narrow the execution context of the existing record, or require a new authenticated confirmation. We present $b=1$ for clarity. Larger budgets allocate at most $b$ independently consumable slots under the same authorization record.
	
	The issuer returns an authenticated manifest
	\begin{equation}
		m =
		(\sigma,e_u,b,\chi,t_i,t_e,\nu,\mu),
		\label{eq:manifest}
	\end{equation}
	where $\mu$ authenticates all preceding fields. The manifest represents the ledger record rather than creating authority
	itself. Modifying the action identity, execution context, budget,
	lifetime, or record identifier invalidates $\mu$. The validity interval
	limits stale rights~\citep{10.1145/74850.74870}, while executor binding
	ties execution to an authenticated principal
	\citep{10.1145/214451.214452}.
	
	Issuance recomputes $\sigma$ from the trusted contract and complete
	arguments, verifies that $e_u$ binds $\sigma$, $P_u$, and $b$, validates
	the execution context $\chi$, and inserts or retrieves the unique record
	for $(\sigma,e_u)$ in one transaction. Retries of the same confirmation
	challenge must reuse $e_u$; otherwise, one user decision could create
	multiple authorization instances.
	
	\subsection{Execution and Recovery}
	\label{sec:execution-recovery}
	
	Each budget slot follows
	\[
	\textsc{Issued}
	\longrightarrow
	\textsc{Prepared}
	\longrightarrow
	\textsc{Committed}.
	\]
	Before preparation, the executor verifies the manifest, principal,
	validity interval, action identity, and epochs. Preparation is a
	linearizable compare-and-swap~\citep{10.1145/78969.78972} that succeeds
	only for an unrevoked \textsc{Issued} slot and allocates
	\begin{equation}
		k = H(\nu \parallel j \parallel \mathsf{tool}),
		\label{eq:idempotency-key}
	\end{equation}
	where $j$ identifies the slot. Concurrent requests therefore have at
	most one successful preparation.
	
	\begin{table}[t]
		\centering
		\small
		\setlength{\tabcolsep}{4pt}
		\begin{tabular}{
				p{0.17\columnwidth}
				p{0.27\columnwidth}
				p{0.23\columnwidth}
				p{0.24\columnwidth}}
			\toprule
			Operation & Precondition & Result & Protection \\
			\midrule
			
			Issue &
			No $(\sigma,e_u)$ record &
			\textsc{Issued} &
			Reissuance \\
			
			Prepare &
			Valid \textsc{Issued} slot &
			\textsc{Prepared} &
			Concurrent admission \\
			
			Commit &
			\textsc{Prepared}, matching $k$ &
			\textsc{Committed} &
			Recovery consistency \\
			
			Revoke &
			\textsc{Issued} &
			\textsc{Revoked} &
			Stale use \\
			
			Recover &
			\textsc{Prepared} &
			\textsc{Prepared}/
			\textsc{Committed} &
			Crash duplication \\
			\bottomrule
		\end{tabular}
		\caption{CapLease transitions for one budget slot.}
		\label{tab:caplease-transitions}
	\end{table}
	
	After preparation, the executor invokes the canonical call with $k$. The
	tool returns a receipt containing the same key, which the ledger verifies
	before committing the slot. Calls with mismatched arguments, principals,
	resources, or epochs are rejected. Because the ledger and tool generally
	cannot share one atomic transaction~\citep{DBLP:conf/vldb/Gray81},
	preparation durably records admission before the external call.
	
	A missing acknowledgement does not reveal whether the operation
	completed~\citep{10.1145/2080.357392}. Recovery therefore resumes the
	existing \textsc{Prepared} slot and retries with the same $k$. An
	idempotent sink returns the prior result instead of reapplying the effect.
	Thus, a linearizable ledger bounds admission, while sink idempotency also
	bounds external effects. Without the latter, CapLease cannot guarantee
	exactly-once physical effects.
	
	Revocation and preparation are ordered by the same ledger. If revocation
	wins, execution cannot begin; otherwise, recovery continues with the
	existing key. Delegation creates an authenticated child execution context under the same authorization instance. The child may narrow the executor, scope, budget, or expiry but cannot change $\sigma$ or $e_u$. Creating the child consumes or reserves parent budget, preventing parent and child contexts from independently exercising the same authorization.
	
	\subsection{Realizations and Guarantees}
	\label{sec:realization}
	
	Server Ledger stores the same authorization record and applies the same
	issuance, preparation, revocation, recovery, and commit transitions.
	Instead of a manifest, the executor receives an authenticated selector
	for $\nu$ and asks the server to validate each operation. Under matched
	centralized assumptions, both realizations provide the same replay
	guarantees. CapLease exposes authenticated scope fields, whereas Server
	Ledger retains them behind the authorization service; manifest
	transmission alone does not strengthen replay safety.
	
	\begin{proposition}[Budgeted authorization consumption]
		Assume correct canonicalization and trusted metadata, stable
		confirmation identifiers, a durable non-rollback linearizable ledger,
		and authenticated principals and manifests. Then CapLease satisfies
		bounded issuance and admission. If the sink enforces idempotency over
		$k$, it also satisfies bounded effects.
	\end{proposition}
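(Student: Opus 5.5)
The argument is a state-machine invariant argument over the linearized history of the ledger. Because the ledger is durable, non-rollback, and linearizable, every execution trace $\tau$ induces a single total order of committed ledger operations (Issue, Prepare, Commit, Revoke, Recover) consistent with real time. I will reason only over this sequence. Each slot's state evolves by the transitions in Table~\ref{tab:caplease-transitions}, and external effects are related to these transitions through the idempotency key $k$. I will fix an arbitrary authorization instance $\alpha=(\sigma,e_u,b)$ and prove the three bounds (\ref{eq:bounded-issuance})--(\ref{eq:bounded-effect}) in order. Each bound will lean on the one before it.

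\emph{Bounded issuance.} By correct canonicalization and trusted metadata, every issuance request derived from a call effect-equivalent to the one authorized recomputes the same $\sigma$. By stable confirmation identifiers, every retry of the same user decision carries the same $e_u$. Hence every request that could count toward $N_{\mathrm{issue}}(\alpha,\tau)$ targets the key $(\sigma,e_u)$. Issuance inserts or retrieves the record for that key in one transaction, and the ledger enforces uniqueness on it. It follows by induction over the linearized history that at most one record $r_\alpha$ exists, and that record allocates at most $b$ slots. Delegation does not break this count. A child context reserves or consumes parent budget and cannot alter $\sigma$ or $e_u$, so the total number of slots across parent and children stays at most $b$. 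Since no slot is ever destroyed and recreated (non-rollback), $N_{\mathrm{issue}}(\alpha,\tau)\le b$. Authenticated manifests also matter here: a forged or modified manifest fails $\mu$, so no slot can be presented that the ledger did not create.

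\emph{Bounded admission.} A slot is admitted exactly when a Prepare CAS succeeds on it. The CAS succeeds only from \textsc{Issued}. No transition returns a slot to \textsc{Issued}: Recover maps \textsc{Prepared} to \textsc{Prepared} or \textsc{Committed}, and Revoke is terminal. So each slot is prepared at most once in the linear order, even under concurrent executors. Combining this with the issuance bound gives $N_{\mathrm{admit}}(\alpha,\tau)\le N_{\mathrm{issue}}(\alpha,\tau)\le b$. Cross-principal attempts never reach the CAS, because principals and manifests are authenticated and the executor binding in $\chi$ is checked beforehand.

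\emph{Bounded effects, and the main obstacle.} This part is the delicate one, because the sink lies outside the ledger transaction. I need an explicit link between physical effects and keys. Every external invocation for $\alpha$ is issued with a key $k=H(\nu\parallel j\parallel\mathsf{tool})$, and that key is allocated only by a successful Prepare of slot $j$. Recovery reuses the same key rather than minting a new one. Therefore the set of distinct keys ever sent to the sink for $\alpha$ injects into the set of admitted slots. An idempotent sink materializes at most one effect per key, so $N_{\mathrm{effect}}(\alpha,\tau)\le N_{\mathrm{admit}}(\alpha,\tau)\le b$.

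The step I expect to need care is arguing that no code path invokes the tool without a prepared key, and that two slots never collide on a key. For the first point, I will appeal to the protocol: the executor calls the tool only after Prepare returns, and Commit verifies the receipt's key. For the second, I will note that $\nu$ and $j$ are unique per slot, so distinct slots receive distinct keys, and collisions of $H$ are excluded by assumption. This is also where the conditional hypothesis shows why it is needed. Without sink idempotency, a Recover retry with the same $k$ may reapply the effect. So only the admission bound survives, which matches the statement of the proposition.
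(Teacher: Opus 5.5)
Your proposal is correct and takes the same route as the paper's proof sketch. Uniqueness over $(\sigma,e_u)$ plus the slot counter bounds issuance, the one-way \textsc{Issued}$\rightarrow$\textsc{Prepared} compare-and-swap bounds admission, and key reuse plus sink idempotency bounds effects, giving $N_{\mathrm{effect}}\le N_{\mathrm{admit}}\le N_{\mathrm{issue}}\le b$; you simply spell this out in more detail (linearized history, no transition back to \textsc{Issued}, delegation reserving parent budget).

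One small correction: you do not need collision resistance of $H$, which the proposition does not assume. Each admitted slot determines a single key that recovery reuses, so at most $N_{\mathrm{admit}}$ distinct keys ever reach the sink, and a collision could only merge effects, never multiply them.
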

	
\begin{proof}[Proof sketch]
	Uniqueness over $(\sigma,e_u)$ and the slot counter bound
	$N_{\mathrm{issue}}(\alpha,\tau)$. The atomic
	\textsc{Issued}$\rightarrow$\textsc{Prepared} transition permits at most
	one admission per slot, bounding $N_{\mathrm{admit}}(\alpha,\tau)$.
	Every retry of a prepared slot reuses $k$; sink idempotency therefore
	maps these retries to one external effect. Hence,
	\[
	N_{\mathrm{effect}}(\alpha,\tau)
	\leq
	N_{\mathrm{admit}}(\alpha,\tau)
	\leq
	N_{\mathrm{issue}}(\alpha,\tau)
	\leq b.
	\]
\end{proof}

Without sink idempotency, the issuance and admission bounds remain, but
the effect bound does not follow. Authentication prevents contract
tampering but cannot correct invalid trusted metadata.

Our prototype uses a transactional SQL ledger with unique authorization
and confirmation indexes. Conditional updates implement preparation,
revocation, and commit, while transition records support recovery and
audit checks. Digests cover deterministic encodings of trusted
tool-specific canonical forms; deterministic serialization alone is not
semantic canonicalization.

\section{Experiments}

\subsection{Experimental Setup}

We evaluate CapLease along five axes. First, an agent-generation harness
runs three model families over 282 high-risk actions, four uncertain
execution outcomes, and three seeds, yielding 10,152 trajectories.
Second, public and component baselines are evaluated over 282 matched
workflows containing unauthorized initial actions, parameter drift,
fresh reissuance, and duplicate admission. Third, multi-budget tests
cover $b\in\{1,2,5,10\}$ over 7,896 transactional instances, while
12,000 process-kill and restart schedules exercise twelve failure
points. Fourth, a held-out semantic canonicalization benchmark contains
324 equivalent and 324 effect-distinct call pairs across 27 high-risk
tools. Finally, structured-denial recovery is evaluated on 100 attacked
tasks, with the
original 35-task diagnostic set retaining detailed interaction metrics. We additionally evaluate authenticated contract integrity on 74
official tool schemas and isolate trusted-input failures over the 9,584
security cases. The original policy-I/O and mixed-trust diagnostics,
complete configurations, and row-level outputs appear in the
supplement.

\paragraph{Transactional state tests.}
For every one of 282 allowed high-risk actions, we test fresh reissuance after commit, eight concurrent consumers, pre-consumption theft, crash recovery through a sink idempotency key, and confirmation-event replay. We additionally test delegated and depth-two execution, mutation after delegation, revocation before execution, 50 repeated revoke/prepare races, and hash-chained audit completeness. The Server Ledger receives the same principals, exact manifest, confirmation uniqueness, SQLite transactions, revocation semantics, audit events, and sink contract, but keeps authorization server-side behind a semantic-key selector.

\paragraph{Official schemas and mixed-trust traces.}
We generate 74 tool schemas (69 unique names; 27 high risk) with the official AgentDojo v0.1.35 \texttt{make\_function} implementation, pinned to commit \texttt{a75aba7}. Six variants per tool cover canonical text, omitted descriptions, neutral aliases, generic nested wrappers, misleading read-only text, and annotation downgrade. Five classification strategies yield 2220 strategy--schema rows. The capability reference is fixed before evaluation. Schema-text and implementation-static analyses provide two automated audit views over the same predefined capability mapping.

The mixed-trust study pairs each real allowed high-risk row with a real malicious row of the same capability when available. Each of 282 deterministic workflows contains an ordinary external attack, an unsigned-annotation downgrade, parameter drift, a legitimate action, and replay. State-based oracles measure attack execution, drift, duplicate effects, recovery, confirmation, replanning, and legitimate-action completion. These deterministic workflows complement the agent-generation study by
isolating authority, parameter drift, replay, and recovery transitions
under matched action and state oracles.

\begin{table*}[!ht]
	\centering
	\small
	\begin{tabular}{lrr@{\quad}lrr}
		\toprule
		Replay/recovery family & Lease & Ledger & Delegation/audit family & Lease & Ledger \\
		\midrule
		Fresh reissuance & 0/282 & 0/282 & Delegated execution & 0/282 & 0/282 \\
		Concurrent spend & 0/282 & 0/282 & Depth-two delegation & 0/282 & 0/282 \\
		Pre-consumption theft & 0/282 & 0/282 & Mutated delegation & 0/282 & 0/282 \\
		Crash retry & 0/282 & 0/282 & Revocation before execution & 0/282 & 0/282 \\
		Confirmation replay & 0/282 & 0/282 & Revoke/prepare race & 0/282 & 0/282 \\
		-- & -- & -- & Audit-chain completeness under the trusted-ledger assumption & 0/282 & 0/282 \\
		\bottomrule
	\end{tabular}
	\caption{Counterexamples in transactional tests. Audit results assume a trusted, non-rollback ledger. Both designs use identical server state and an idempotent sink; the right half includes 50 race repeats per action (28,200 two-thread races across both designs).}
	\label{tab:transactional}
\end{table*}

\paragraph{Reproducibility.}
The supplement provides pinned source revisions, exact model and
decoding configurations, baseline implementations, regeneration
commands, parse-failure accounting, and row-level outputs for all
reported studies.

\begin{figure}[h]
	\centering
	\includegraphics[
	width=0.9\columnwidth
	]{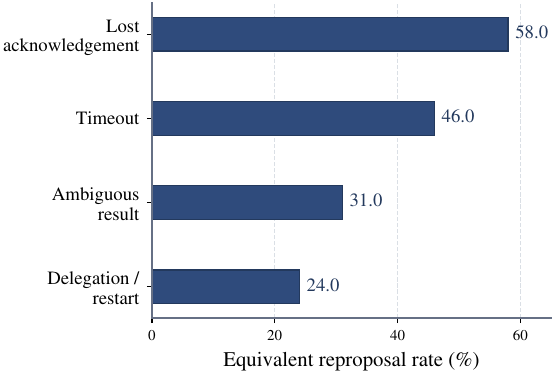}
	\caption{
		Equivalent reproposal rates across 10,152 agent trajectories
		under four uncertain execution outcomes. Lost acknowledgements
		and timeouts most frequently induce semantic replay.
	}
	\label{fig:agent-reproposal}
\end{figure}

\subsection{Agent-Generated Semantic Replay}

Across 10,152 valid trajectories, agents produce 4,036 semantically
equivalent reproposals, yielding an overall rate of 39.8\%.
Figure~\ref{fig:agent-reproposal} shows that lost acknowledgements
produce the highest rate at 58.0\%, followed by timeouts at 46.0\%,
ambiguous results at 31.0\%, and delegation or restart at 24.0\%.
The effect appears across all three model families, with aggregate
rates ranging from 36.0\% to 44.0\%.

\begin{table*}[!t]
	\centering
	\small
	\setlength{\tabcolsep}{5pt}
	\begin{tabular}{lcccc}
		\toprule
		System &
		Initial unauth. &
		Parameter drift &
		Fresh reissuance &
		Same-artifact duplicate \\
		\midrule
		No Control
		& 1.000 & 1.000 & 1.000 & 1.000 \\
		Authority Only
		& 0.000 & 1.000 & 1.000 & 1.000 \\
		Consumption Only
		& 1.000 & 0.000 & 0.000 & 0.000 \\
		\midrule
		Progent
		& 0.110 & 0.170 & 1.000 & 1.000 \\
		PACT
		& 0.014 & 0.032 & 1.000 & 1.000 \\
		AIRGuard
		& 0.011 & 0.025 & 1.000 & 1.000 \\
		SUDP
		& 0.000 & 0.000 & 1.000 & 0.000 \\
		\midrule
		Authority + CapLease
		& 0.000 & 0.000 & 0.000 & 0.000 \\
		Server Ledger
		& 0.000 & 0.000 & 0.000 & 0.000 \\
		\bottomrule
	\end{tabular}
	\caption{
		Authority and consumption controls over 282 matched workflows.
		Authority-focused methods reduce invalid initial authorization but do
		not maintain durable cross-artifact consumption state. SUDP prevents
		duplicate admission of the same grant but permits fresh grant
		reissuance. CapLease and Server Ledger combine authority validation with
		durable consumption.
	}
	\label{tab:authority-consumption}
\end{table*}

The identifier-local baseline therefore rejects reuse of $z_1$ in all tested cases but issues a fresh $z_2$ for every equivalent reauthorization request. Both stateful designs prevent fresh reissuance and duplicate effects while retaining a legitimate first-action success rate of 1.0. The result identifies durable semantic consumption, rather than the mere presence of a transmitted capability, as the operative control.

\begin{table}[h]
	\centering
	\small
	\begin{tabular}{lrr}
		\toprule
		Enumerated class & Cases & Counterexamples \\
		\midrule
		All security rows & 9584 & 0 \\
		\quad Stress attacks & 8682 & 0 \\
		All expected-allow rows & 325 & 0 \\
		\quad Stress benign rows & 156 & 0 \\
		Read-only diagnostics & 15 & -- \\
		\bottomrule
	\end{tabular}
	\caption{
		Finite policy-I/O test space. Indented rows are subsets of the
		corresponding parent classes; read-only diagnostics are not scored.
	}
	\label{tab:static}
\end{table}

\subsection{Authority and Durable Consumption}
\label{sec:authority-consumption}

Authority checking and durable consumption address different failure
modes. Over 282 matched workflows, Authority Only blocks invalid initial
authorization but permits parameter drift, fresh reissuance, and
duplicate execution; Consumption Only shows the converse behavior.
Their composition and the matched Server Ledger eliminate both failure
classes. Public baselines show the same boundary. Progent, PACT, and AIRGuard
reduce invalid initial authorization but permit all tested fresh
reissuance and duplicate-admission events. SUDP rejects replay and
duplicate redemption of the same grant, but permits a fresh
independently redeemable grant derived from the same upstream
authorization event. Implementation details and per-case outputs appear in the supplement.

\begin{table}[!t]
	\centering
	\small
	\setlength{\tabcolsep}{3.5pt}
	\renewcommand{\arraystretch}{0.95}
	\begin{tabular}{p{0.27\columnwidth}
			p{0.18\columnwidth}
			p{0.46\columnwidth}}
		\toprule
		Study & Scale & Main result \\
		\midrule
		Multi-budget
		& 7,896
		& No issuance, admission, or effect violation for
		$b\in\{1,2,5,10\}$ \\
		
		Fault injection
		& 12,000
		& No resurrection, excess admission, or duplicate effect
		with an idempotent sink \\
		
		Canonicalization
		& 648 pairs
		& Evasion reduced from 13.9\% to 0.9\%; no observed
		collision \\
		
		Safe recovery
		& 100 tasks
		& Completion improves from 9\% to 37\% and 55\% with
		structured feedback \\
		\bottomrule
	\end{tabular}
	\caption{
		Robustness and recovery summary. Complete per-condition results,
		confidence intervals, and failure traces appear in the supplement.
	}
	\label{tab:robustness-summary}
\end{table}

\begin{figure}[h]
	\centering
	\includegraphics[
	width=0.9\columnwidth
	]{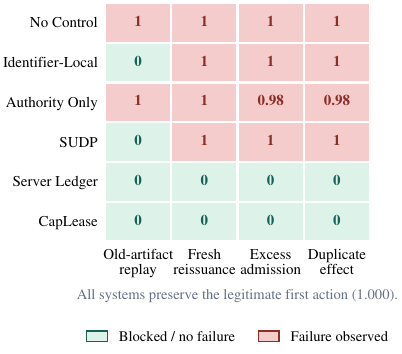}
	\caption{
		Cross-artifact replay failures over the 4,036 equivalent
		reproposals. Identifier-local tokens and SUDP block replay of
		the original artifact but permit fresh reissuance and repeated
		cross-artifact execution. CapLease and Server Ledger block all
		evaluated failures.
	}
	\label{fig:replay-matrix}
\end{figure}

Figure~\ref{fig:replay-matrix} separates old-artifact replay from fresh
semantic reissuance. Identifier-local tokens reject the consumed token but issue a fresh token for every equivalent request. SUDP prevents repeated redemption of the original grant, but an
equivalent reproposal receives a fresh independently redeemable
operation-bound grant under the same confirmation event. CapLease and Server Ledger instead bind
all such artifacts to the same durable authorization state, preventing
fresh reissuance, excess admission, and duplicate effects while
preserving the legitimate first action.

\paragraph{Mechanism--failure correspondence.}
The experiments isolate the role of each protocol mechanism.
Uniqueness over $(\sigma,e_u)$ prevents fresh authorization records
from being created for the same user decision. Atomic slot preparation
prevents concurrent or repeated admission, while the stable key $k$
makes uncertain retries converge on one sink operation. The Authority
Only and Consumption Only ablations show that initial authorization
checking and durable consumption are complementary. Finally, the
non-idempotent-sink negative control localizes exactly-once external
effects to the sink contract rather than the ledger alone.

\subsection{Robustness, recovery, and boundaries}
Table~\ref{tab:robustness-summary} shows that the consumption invariant
holds beyond single-use authorization and under real process failures.
All 1,128 legitimate budget configurations consume their authorized
budgets, and all prepared and committed operations remain recoverable,
ruling out safety through blanket rejection. The non-idempotent-sink
negative control produces effect-before-receipt duplication, confirming
that durable ledger state bounds issuance and admission, whereas
exactly-once physical effects additionally require sink idempotency.
Tool-specific canonicalization reduces equivalent-call evasion from
13.9\% to 0.9\% with no observed collision. Structured feedback raises
safe completion from 9/100 to 37/100, and reason-guided replanning
reaches 55/100. On the original 35-task diagnostic set, all interfaces retain zero
unsafe effects and 97/97 benign success, showing that recovery does not
weaken the execution boundary. The Issue--Prepare--Commit path averages
0.92\,ms, with a p95 latency of 1.41\,ms, and sustains 4,650
operations/s at eight workers.

\paragraph{Real-schema contract integrity.}
We evaluate five contract strategies on 74 official AgentDojo tool
schemas, including 27 high-risk and 47 benign tools. Authenticated
inline contracts reject all 27 high-risk nested wrappers and all 27
annotation-downgrade variants while preserving all 47 benign nested
wrappers. In contrast, name-only and schema-text inference accept all
27 high-risk nested wrappers, while unsigned inline metadata accepts
all 27 annotation-downgrade attacks. These results show that the
trusted-contract boundary can be realized on the tested schemas without
sacrificing benign-wrapper availability.

\begin{table}[!t]
	\centering
	\small
	\setlength{\tabcolsep}{4pt}
	\renewcommand{\arraystretch}{0.94}
	\begin{tabular}{lrr}
		\toprule
		Trusted-input condition & Unsafe & Rate \\
		\midrule
		Honest schema/provenance
		& 0/9,584 & .000 \\
		Side effect marked read-only
		& 9,302/9,584 & .971 \\
		External provenance promoted
		& 9,020/9,584 & .941 \\
		Organization policy omitted
		& 141/9,584 & .015 \\
		\bottomrule
	\end{tabular}
	\caption{
		Sensitivity to isolated trusted-input errors. Authentication
		prevents contract tampering but cannot correct an incorrectly
		authored trusted contract or provenance record.
	}
	\label{tab:tcb-sensitivity}
\end{table}

\paragraph{Trusted-input sensitivity.}
The real-schema results establish that authenticated contracts can
preserve both safety and benign availability when correctly authored.
Table~\ref{tab:tcb-sensitivity} then isolates failures of this trusted
boundary. Honest schemas and provenance yield no unsafe execution,
whereas marking a side-effecting tool as read-only or promoting external
evidence to user authority bypasses the upstream gate in most tested
cases. CapLease therefore preserves the budget of the authorization
established by the trusted runtime; authenticated contracts and
runtime-produced provenance are required to ensure that this initial
authorization is itself sound.

\subsection{Overall evidence and system implication}
Together, the results answer the three research questions. Uncertain
execution outcomes produce semantic replay across model families, while
token-local and grant-local single use prevent only reuse of the
original artifact. Durable uniqueness and atomic slot transitions are
required to bound consumption across fresh artifacts, and a stable sink
key resolves uncertain completion. These guarantees are not obtained by
rejecting useful work: all legitimate budget configurations complete,
all prepared and committed operations recover, and the detailed benign
set retains 97/97 success. The matched Server Ledger result isolates
durable state as the essential safety primitive. CapLease packages that
primitive as an authenticated and inspectable handoff across separated
proposal, authority, and execution components, without reconstructing
authority from model-generated text.

\section{Conclusion}
\label{sec:conclusion}

We identify \emph{semantic replay}, where fresh execution artifacts
consume one authorization beyond its budget. Across 10,152 agent
trajectories, uncertain outcomes induce equivalent reproposals in
39.8\% of cases, showing that token-local single use is insufficient.
CapLease binds the action, confirmation, and budget to durable
token-independent state. Its Issue--Prepare--Commit protocol prevents
fresh reissuance, excess admission, and duplicate effects while
composing with existing authority mechanisms. A matched Server Ledger
provides the same centralized guarantees, confirming that replay safety
depends on durable authorization state rather than token representation.

\bibliography{references}

\begin{thebibliography}{28}
\providecommand{\natexlab}[1]{#1}

\bibitem[{An et~al.(2025)An, Zhang, Du, Zhou, Li, Lin, and
  Ji}]{an-etal-2025-ipiguard}
An, H.; Zhang, J.; Du, T.; Zhou, C.; Li, Q.; Lin, T.; and Ji, S. 2025.
\newblock {IPIG}uard: A Novel Tool Dependency Graph-Based Defense Against
  Indirect Prompt Injection in {LLM} Agents.
\newblock In Christodoulopoulos, C.; Chakraborty, T.; Rose, C.; and Peng, V.,
  eds., \emph{Proceedings of the 2025 Conference on Empirical Methods in
  Natural Language Processing}, 1023--1039. Suzhou, China: Association for
  Computational Linguistics.
\newblock ISBN 979-8-89176-332-6.

\bibitem[{Bauer, Schneider, and Felten(2002)}]{647253.757044}
Bauer, L.; Schneider, M.~A.; and Felten, E.~W. 2002.
\newblock A General and Flexible Access-Control System for the Web.
\newblock In \emph{Proceedings of the 11th USENIX Security Symposium},
  93–108. USA: USENIX Association.
\newblock ISBN 1931971005.

\bibitem[{Birgisson et~al.(2014)Birgisson, Politz, Erlingsson, Taly, Vrable,
  and Lentczner}]{birgisson2014macaroons}
Birgisson, A.; Politz, J.~G.; Erlingsson, U.; Taly, A.; Vrable, M.; and
  Lentczner, M. 2014.
\newblock Macaroons: Cookies with Contextual Caveats for Decentralized
  Authorization in the Cloud.
\newblock In \emph{NDSS}.

\bibitem[{Birrell(1985)}]{10.1145/214451.214452}
Birrell, A.~D. 1985.
\newblock Secure communication using remote procedure calls.
\newblock \emph{ACM Trans. Comput. Syst.}, 3(1): 1–14.

\bibitem[{Birrell and Nelson(1984)}]{10.1145/2080.357392}
Birrell, A.~D.; and Nelson, B.~J. 1984.
\newblock Implementing remote procedure calls.
\newblock \emph{ACM Trans. Comput. Syst.}, 2(1): 39–59.

\bibitem[{Cao et~al.(2024)Cao, Meng, Stefan, and Fernandes}]{299818}
Cao, L.; Meng, L.; Stefan, D.; and Fernandes, E. 2024.
\newblock Stateful Least Privilege Authorization for the Cloud.
\newblock In \emph{33rd USENIX Security Symposium (USENIX Security 24)},
  3477--3494. Philadelphia, PA: USENIX Association.
\newblock ISBN 978-1-939133-44-1.

\bibitem[{Chennabasappa et~al.(2025)}]{chennabasappa2025llamafirewall}
Chennabasappa, S.; et~al. 2025.
\newblock LlamaFirewall: An Open Source Guardrail System for Building Secure AI
  Agents.
\newblock \emph{arXiv preprint arXiv:2505.03574}.

\bibitem[{Debenedetti et~al.(2025)Debenedetti, Shumailov, Fan, Hayes, Carlini,
  Fabian, Kern, Shi, Terzis, and Tram{\`e}r}]{debenedetti2025camel}
Debenedetti, E.; Shumailov, I.; Fan, T.; Hayes, J.; Carlini, N.; Fabian, D.;
  Kern, C.; Shi, C.; Terzis, A.; and Tram{\`e}r, F. 2025.
\newblock Defeating Prompt Injections by Design.
\newblock \emph{arXiv preprint arXiv:2503.18813}.

\bibitem[{Debenedetti et~al.(2024)Debenedetti, Zhang, Balunovic,
  Beurer-Kellner, Fischer, and Tramer}]{debenedetti2024agentdojo}
Debenedetti, E.; Zhang, J.; Balunovic, M.; Beurer-Kellner, L.; Fischer, M.; and
  Tramer, F. 2024.
\newblock AgentDojo: A Dynamic Environment to Evaluate Prompt Injection Attacks
  and Defenses for LLM Agents.
\newblock In \emph{Advances in Neural Information Processing Systems, Datasets
  and Benchmarks Track}.

\bibitem[{Fan et~al.(2026)Fan, Li, Tian, Wang, Li, and Wang}]{fan2026pact}
Fan, L.; Li, Z.; Tian, Y.; Wang, Y.; Li, R.; and Wang, X. 2026.
\newblock The Granularity Mismatch in Agent Security: Argument-Level Provenance
  Solves Enforcement and Isolates the LLM Reasoning Bottleneck.
\newblock \emph{arXiv preprint arXiv:2605.11039}.

\bibitem[{Gray and Cheriton(1989)}]{10.1145/74850.74870}
Gray, C.; and Cheriton, D. 1989.
\newblock Leases: an efficient fault-tolerant mechanism for distributed file
  cache consistency.
\newblock In \emph{Proceedings of the Twelfth ACM Symposium on Operating
  Systems Principles}, SOSP '89, 202–210. New York, NY, USA: Association for
  Computing Machinery.
\newblock ISBN 0897913388.

\bibitem[{Gray(1981)}]{DBLP:conf/vldb/Gray81}
Gray, J. 1981.
\newblock The Transaction Concept: Virtues and Limitations (Invited Paper).
\newblock In \emph{Very Large Data Bases, 7th International Conference,
  September 9-11, 1981, Cannes, France, Proceedings}, 144--154. IEEE Computer
  Society.

\bibitem[{Herlihy and Wing(1990)}]{10.1145/78969.78972}
Herlihy, M.~P.; and Wing, J.~M. 1990.
\newblock Linearizability: a correctness condition for concurrent objects.
\newblock \emph{ACM Trans. Program. Lang. Syst.}, 12(3): 463–492.

\bibitem[{Howell and Kotz(2000)}]{271165}
Howell, J.; and Kotz, D. 2000.
\newblock {End-to-End} Authorization.
\newblock In \emph{Fourth Symposium on Operating Systems Design and
  Implementation (OSDI 2000)}. San Diego, CA: USENIX Association.

\bibitem[{Jia et~al.(2025)Jia, Wu, Qin, and Squicciarini}]{jia-etal-2025-task}
Jia, F.; Wu, T.; Qin, X.; and Squicciarini, A. 2025.
\newblock The Task Shield: Enforcing Task Alignment to Defend Against Indirect
  Prompt Injection in {LLM} Agents.
\newblock In Che, W.; Nabende, J.; Shutova, E.; and Pilehvar, M.~T., eds.,
  \emph{Proceedings of the 63rd Annual Meeting of the Association for
  Computational Linguistics (Volume 1: Long Papers)}, 29680--29697. Vienna,
  Austria: Association for Computational Linguistics.
\newblock ISBN 979-8-89176-251-0.

\bibitem[{Lin et~al.(2026)Lin, Zhou, Zheng, Liu, Xu, Chen, and
  Chen}]{lin-etal-2026-vigil}
Lin, J.; Zhou, Z.; Zheng, Z.; Liu, S.; Xu, T.; Chen, Y.; and Chen, E. 2026.
\newblock {VIGIL}: Defending {LLM} Agents Against Tool-Stream Injection via
  Verify-Before-Commit.
\newblock In Liakata, M.; Moreira, V.~P.; Zhang, J.; and Jurgens, D., eds.,
  \emph{Proceedings of the 64th Annual Meeting of the {A}ssociation for
  {C}omputational {L}inguistics (Volume 1: Long Papers)}, 9764--9785. San
  Diego, California, United States: Association for Computational Linguistics.
\newblock ISBN 979-8-89176-390-6.

\bibitem[{Qin et~al.(2026)Qin, Zhuang, Zhou, Han, and Zhang}]{qin2026airguard}
Qin, S.; Zhuang, H.; Zhou, Y.; Han, Y.; and Zhang, X. 2026.
\newblock {AIRGuard}: Guarding Agent Actions with Runtime Authority Control.
\newblock \emph{arXiv preprint arXiv:2605.28914}.

\bibitem[{Ruan et~al.(2023)Ruan, Dong, Wang, Pitis, Zhou, Ba, Dubois, Maddison,
  and Hashimoto}]{ruan2023toolemu}
Ruan, Y.; Dong, H.; Wang, A.; Pitis, S.; Zhou, Y.; Ba, J.; Dubois, Y.;
  Maddison, C.~J.; and Hashimoto, T. 2023.
\newblock Identifying the Risks of LM Agents with an LM-Emulated Sandbox.
\newblock \emph{arXiv preprint arXiv:2309.15817}.

\bibitem[{Saltzer and Schroeder(1975)}]{saltzer1975protection}
Saltzer, J.~H.; and Schroeder, M.~D. 1975.
\newblock The Protection of Information in Computer Systems.
\newblock \emph{Proceedings of the IEEE}, 63(9): 1278--1308.

\bibitem[{Santos-Grueiro(2026)}]{santosgrueiro2026cxi}
Santos-Grueiro, I. 2026.
\newblock Context-to-Execution Integrity for {LLM} Agents.
\newblock \emph{arXiv preprint arXiv:2607.06000}.

\bibitem[{Schick et~al.(2023)Schick, Dwivedi-Yu, Dessi, Raileanu, Lomeli,
  Zettlemoyer, Cancedda, and Scialom}]{schick2023toolformer}
Schick, T.; Dwivedi-Yu, J.; Dessi, R.; Raileanu, R.; Lomeli, M.; Zettlemoyer,
  L.; Cancedda, N.; and Scialom, T. 2023.
\newblock Toolformer: Language Models Can Teach Themselves to Use Tools.
\newblock In \emph{Advances in Neural Information Processing Systems}.

\bibitem[{Shi et~al.(2025{\natexlab{a}})Shi, He, Wang, Wu, Li, Guo, and
  Song}]{shi2025progent}
Shi, T.; He, J.; Wang, Z.; Wu, L.; Li, H.; Guo, W.; and Song, D.
  2025{\natexlab{a}}.
\newblock {Progent}: Programmable Privilege Control for {LLM} Agents.
\newblock \emph{arXiv preprint arXiv:2504.11703}.

\bibitem[{Shi et~al.(2025{\natexlab{b}})Shi, Zhu, Wang, Jia, Cai, Liang, Wang,
  Alzahrani, Lu, Kawaguchi, Alomair, Zhao, Wang, Gong, Guo, and
  Song}]{shi2025promptarmor}
Shi, T.; Zhu, K.; Wang, Z.; Jia, Y.; Cai, W.; Liang, W.; Wang, H.; Alzahrani,
  H.; Lu, J.; Kawaguchi, K.; Alomair, B.; Zhao, X.; Wang, W.~Y.; Gong, N.; Guo,
  W.; and Song, D. 2025{\natexlab{b}}.
\newblock PromptArmor: Simple yet Effective Prompt Injection Defenses.
\newblock \emph{arXiv preprint arXiv:2507.15219}.

\bibitem[{Yao et~al.(2023)Yao, Zhao, Yu, Du, Shafran, Narasimhan, and
  Cao}]{yao2023react}
Yao, S.; Zhao, J.; Yu, D.; Du, N.; Shafran, I.; Narasimhan, K.; and Cao, Y.
  2023.
\newblock ReAct: Synergizing Reasoning and Acting in Language Models.
\newblock In \emph{International Conference on Learning Representations}.

\bibitem[{Yu et~al.(2026)Yu, Geng, Zeng, and Knottenbelt}]{yu2026sudp}
Yu, X.; Geng, H.; Zeng, X.; and Knottenbelt, W. 2026.
\newblock {SUDP}: Secret-Use Delegation Protocol for Agentic Systems.
\newblock \emph{arXiv preprint arXiv:2604.24920}.

\bibitem[{Zhan et~al.(2024)Zhan, Liang, Ying, and
  Kang}]{zhan-etal-2024-injecagent}
Zhan, Q.; Liang, Z.; Ying, Z.; and Kang, D. 2024.
\newblock {I}njec{A}gent: Benchmarking Indirect Prompt Injections in
  Tool-Integrated Large Language Model Agents.
\newblock In Ku, L.-W.; Martins, A.; and Srikumar, V., eds., \emph{Findings of
  the Association for Computational Linguistics: ACL 2024}, 10471--10506.
  Bangkok, Thailand: Association for Computational Linguistics.

\bibitem[{Zhang et~al.(2025)Zhang, Huang, Mei, Yao, Wang, Zhan, Wang, and
  Zhang}]{zhang2025iclr-agent}
Zhang, H.; Huang, J.; Mei, K.; Yao, Y.; Wang, Z.; Zhan, C.; Wang, H.; and
  Zhang, Y. 2025.
\newblock {Agent Security Bench (ASB): Formalizing and Benchmarking Attacks and
  Defenses in LLM-Based Agents}.
\newblock In \emph{International Conference on Learning Representations}.

\bibitem[{Zhu et~al.(2025)Zhu, Yang, Wang, Guo, and Wang}]{zhu2025melon}
Zhu, K.; Yang, X.; Wang, J.; Guo, W.; and Wang, W.~Y. 2025.
\newblock MELON: Provable Defense Against Indirect Prompt Injection Attacks in
  AI Agents.
\newblock \emph{arXiv preprint arXiv:2502.05174}.

\end{thebibliography}

\end{document}